\newtheorem{mydef}{Definition}
\newtheorem{mythm}[mydef]{Theorem}
\newtheorem{mylem}[mydef]{Lemma}
\newcommand{\inspace}{\ensuremath{\mathcal{X}}}   
\newcommand{\pp}[1]{\ensuremath{\mathbb{#1}}}     
\newcommand{\pspace}{\ensuremath{\mathscr{P}}}    
\newcommand{\hbspace}{\ensuremath{\mathcal{H}}}   
\newcommand{\abbrvmm}[1]{\ensuremath{\mu_{#1}}}
\newcommand{\dd}{\, \mathrm{d}} 
\providecommand{\abs}[1]{\lvert#1\rvert}
\providecommand{\norm}[1]{\lVert#1\rVert}
\title{Learning from Distributions via Support Measure Machines}
\author{Krikamol Muandet \\
  MPI for Intelligent Systems, 
  T\"{u}bingen \\
  \href{mailto:krikamol@tuebingen.mpg.de}{\texttt{krikamol@tuebingen.mpg.de}}
  \And
  Kenji Fukumizu \\
  The Institute of Statistical Mathematics, Tokyo \\  
  \href{mailto:fukumizu@ism.ac.jp}{\texttt{fukumizu@ism.ac.jp}}
  \AND
  Francesco Dinuzzo \\
  MPI for Intelligent Systems, T\"{u}bingen \\
  \href{mailto:fdinuzzo@tuebingen.mpg.de}{\texttt{fdinuzzo@tuebingen.mpg.de}}
  \And
  Bernhard Sch\"{o}lkopf \\
  MPI for Intelligent Systems, 
  T\"{u}bingen \\
  \href{mailto:bs@tuebingen.mpg.de}{\texttt{bs@tuebingen.mpg.de}}
}
\begin{document} 
\maketitle

\begin{abstract} 
  This paper presents a kernel-based discriminative learning framework on probability measures. Rather than relying on large collections of vectorial training examples, our framework learns using a collection of probability distributions that have been constructed to meaningfully represent training data. By representing these probability distributions as mean embeddings in the reproducing kernel Hilbert space (RKHS), we are able to apply many standard kernel-based learning techniques in straightforward fashion. To accomplish this, we construct a generalization of the support vector machine (SVM) called a support measure machine (SMM). Our analyses of SMMs provides several insights into their relationship to traditional SVMs. Based on such insights, we propose a flexible SVM (Flex-SVM) that places different kernel functions on each training example. Experimental results on both synthetic and real-world data demonstrate the effectiveness of our proposed framework.
\end{abstract}

\section{Introduction}

Discriminative learning algorithms are typically trained from large collections of vectorial training examples. In many classical learning problems, however, it is arguably more appropriate to represent training data not as individual data points, but as probability distributions. There are, in fact, 
multiple reasons why probability distributions may be preferable.

Firstly, uncertain or missing data naturally arises in many applications. For example, gene expression data obtained from the microarray experiments are known to be very noisy due to various sources of variabilities \cite{Yang02:cDNA}. In order to reduce uncertainty, and to allow for estimates of confidence levels, experiments are often replicated. Unfortunately, the feasibility of replicating the microarray experiments is often inhibited by cost constraints, as well as the amount of available mRNA.
To cope with experimental uncertainty given a limited amount of data, it is natural to represent each array as a probability distribution that has been designed to approximate the variability of gene expressions across slides.

Probability distributions may be equally appropriate given an abundance of training data. In data-rich disciplines such as neuroinformatics, climate informatics, and astronomy, a high throughput experiment can easily generate a huge amount of data, leading to significant computational challenges in both time and space. Instead of scaling up one's learning algorithms, one can scale down one's dataset by constructing a smaller collection of distributions which represents groups of similar samples. Besides computational efficiency, aggregate statistics can potentially incorporate higher-level information that represents the collective behavior of multiple data points.

Previous attempts have been made to learn from distributions by creating positive definite (p.d.) kernels on probability measures. In \cite{Jebara04Probabilityproduct}, the probability product kernel (PPK) was proposed as a generalized inner product between two input objects, which is in fact closely related to well-known kernels such as the Bhattacharyya kernel \cite{Bhattacharyya43Kernel} and the exponential symmetrized Kullback-Leibler (KL) divergence \cite{Moreno04KL}. In \cite{Hein05Hilbertian}, an extension of a two-parameter family of Hilbertian metrics of Tops\o e was used to define Hilbertian kernels on probability measures. In \cite{Cuturi05SKM}, the semi-group kernels were designed for objects with additive semi-group structure such as positive measures. Recently, \cite{Martins09NIT} introduced nonextensive information theoretic kernels on probability measures based on new Jensen-Shannon-type divergences. Although these kernels have proven successful in many applications, they are designed specifically for certain properties of distributions and application domains. Moreover, there has been no attempt in making a connection to the kernels on corresponding input spaces.

The contributions of this paper can be summarized as follows. First, we prove the representer theorem for a regularization framework over the space of probability distributions, which is a generalization of regularization over the input space on which the distributions are defined (Section \ref{sec:regularization}). Second, a family of positive definite kernels on distributions is introduced (Section \ref{sec:distkernel}). Based on such kernels, a learning algorithm on probability measures called \emph{support measure machine} (SMM) is proposed. An SVM on the input space is provably a special case of the SMM. Third, the paper presents the relations between sample-based and distribution-based methods (Section \ref{sec:relation}). If the distributions depend only on the locations in the input space, the SMM particularly reduces to a more flexible SVM that places different kernels on each data point.


\section{Regularization on probability distributions} 
\label{sec:regularization}

Given a non-empty set $\inspace$, let $\pspace$ denote the set of all probability measures $\pp{P}$ on a measurable space $(\inspace,\mathcal{A})$, where $\mathcal{A}$ is a $\sigma$-algebra of subsets of $\inspace$. The goal of this work is to learn a function $h:\pspace\rightarrow\mathcal{Y}$ given a set of example pairs $\{(\pp{P}_i,y_i)\}_{i=1}^m$, where $\pp{P}_i\in\pspace$ and $y_i\in\mathcal{Y}$. In other words, we consider a supervised setting in which input training examples are probability distributions. In this paper, we focus on the binary classification problem, i.e., $\mathcal{Y}=\{+1,-1\}$.

In order to learn from distributions, we employ a compact representation that not only preserves necessary information of individual distributions, but also permits efficient computations. That is, we adopt a Hilbert space embedding to represent the distribution as a mean function in an RKHS \cite{Bertinet04:RKHS, Smola07Hilbert}. Formally, let $\hbspace$ denote an RKHS of functions $f:\inspace\rightarrow\mathbb{R}$, endowed with a reproducing kernel $k:\inspace\times\inspace\rightarrow\mathbb{R}$. The mean map from $\pspace$ into $\hbspace$ is defined as
\begin{equation}
\label{eq:meanmap}
\mu : \pspace \rightarrow \hbspace, \enspace \pp{P} \longmapsto \int_{\inspace}k(x,\cdot)\dd
\pp{P}(x) \enspace .
\end{equation}
We assume that $k(x,\cdot)$ is bounded for any $x\in\inspace$. It can be shown that, if $k$ is characteristic, the map \eqref{eq:meanmap} is injective, i.e., all the information about the distribution is preserved \cite{Sriperumbudur10:Metrics}. For any $\pp{P}$, letting $\abbrvmm{\pp{P}}=\mu(\pp{P})$, we have the reproducing property
\begin{equation}
\mathbb{E}_{\pp{P}}[f]=\langle\abbrvmm{\pp{P}},f \rangle_{\hbspace}, \enspace \forall f\in\hbspace
\enspace .
\end{equation} 
That is, we can see the mean embedding $\abbrvmm{\pp{P}}$ as a feature map associated with the kernel $K:\pspace\times\pspace\rightarrow\mathbb{R}$, defined as $K(\pp{P},\pp{Q})=\langle\abbrvmm{\pp{P}},\abbrvmm{\pp{Q}}\rangle_{\hbspace}$. Since $\sup_x\|k(x,\cdot)\|_{\hbspace}<\infty$, it also follows that $K(\pp{P},\pp{Q}) = \iint\langle k(x,\cdot),k(z,\cdot)\rangle_{\hbspace}\dd\pp{P}(x)\dd\pp{Q}(z)=\iint k(x,z)\dd\pp{P}(x)\dd\pp{Q}(z)$, where the second equality follows from the reproducing property of $\hbspace$. It is immediate that $K$ is a p.d. kernel on $\pspace$.

The following theorem shows that optimal solutions of a suitable class of regularization problems involving distributions can be expressed as a finite linear combination of mean embeddings.

\begin{mythm}
\label{thm:representer}
Given training examples $(\pp{P}_i,y_i)\in \pspace\times\mathbb{R},\,i=1,\dotsc,m$, a strictly monotonically increasing function $\Omega:[0,+\infty)\rightarrow\mathbb{R}$, and a loss function $\ell:(\pspace\times\mathbb{R}^2)^m \rightarrow\mathbb{R}\cup\{+\infty\}$, any $f\in\mathcal{H}$ minimizing the regularized risk functional
\begin{equation}
  \label{eq:regfunc} 
  \ell\left(\pp{P}_1,y_1,\mathbb{E}_{\pp{P}_1}[f],\dotsc,\pp{P}_m,y_m,\mathbb{E}_{\pp{P}_m}[f]\right) 
  + \Omega\left(\|f\|_{\mathcal{H}}\right)
\end{equation}
\noindent admits a representation of the form $ f = \sum_{i=1}^m\alpha_i \abbrvmm{\pp{P}_i}$ for some $\alpha_i\in\mathbb{R}, \, i=1,\dotsc,m$.
\end{mythm}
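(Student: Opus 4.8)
The plan is to adapt the orthogonality argument behind the classical representer theorem to the present setting, using the fact that the loss depends on $f$ only through the finitely many quantities $\mathbb{E}_{\pp{P}_i}[f]$. The key enabling observation, already established in the excerpt, is the reproducing property $\mathbb{E}_{\pp{P}_i}[f]=\langle\abbrvmm{\pp{P}_i},f\rangle_{\hbspace}$, together with the fact that each mean embedding $\abbrvmm{\pp{P}_i}$ is a genuine element of $\hbspace$ (guaranteed by the boundedness assumption $\sup_x\norm{k(x,\cdot)}_{\hbspace}<\infty$, which makes the defining integral well-defined). This reduces the dependence of the loss term on $f$ to the $m$ inner products $\langle\abbrvmm{\pp{P}_i},f\rangle_{\hbspace}$.

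First, I would introduce the finite-dimensional subspace $S=\operatorname{span}\{\abbrvmm{\pp{P}_1},\dotsc,\abbrvmm{\pp{P}_m}\}\subseteq\hbspace$ and write any $f\in\hbspace$ via the orthogonal decomposition $f=f_S+f_\perp$, where $f_S\in S$ and $f_\perp\in S^{\perp}$. By construction $f_S$ already has the desired form $f_S=\sum_{i=1}^m\alpha_i\abbrvmm{\pp{P}_i}$, so it suffices to show that any minimizer must have $f_\perp=0$. Next, I would show that the loss term is unaffected by $f_\perp$: since $\langle\abbrvmm{\pp{P}_i},f_\perp\rangle_{\hbspace}=0$ for every $i$, the reproducing property gives
\begin{equation}
  \mathbb{E}_{\pp{P}_i}[f]=\langle\abbrvmm{\pp{P}_i},f_S+f_\perp\rangle_{\hbspace}=\langle\abbrvmm{\pp{P}_i},f_S\rangle_{\hbspace}=\mathbb{E}_{\pp{P}_i}[f_S],
\end{equation}
so the first argument of $\ell$ is identical for $f$ and for $f_S$. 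Finally, I would bound the regularizer using the Pythagorean identity $\norm{f}_{\hbspace}^2=\norm{f_S}_{\hbspace}^2+\norm{f_\perp}_{\hbspace}^2\ge\norm{f_S}_{\hbspace}^2$, with equality if and only if $f_\perp=0$; because $\Omega$ is strictly monotonically increasing, this yields $\Omega(\norm{f}_{\hbspace})\ge\Omega(\norm{f_S}_{\hbspace})$, again with equality if and only if $f_\perp=0$.

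Combining these two facts, the objective value at $f$ is at least that at $f_S$, and strictly larger whenever $f_\perp\neq 0$. Hence every minimizer satisfies $f_\perp=0$, i.e. $f=f_S=\sum_{i=1}^m\alpha_i\abbrvmm{\pp{P}_i}$. The main subtle point I expect to highlight is the role of \emph{strict} monotonicity of $\Omega$: without it one would only conclude that \emph{some} minimizer lies in $S$, whereas strict monotonicity upgrades this to the assertion that \emph{every} minimizer has the claimed form. The only other step requiring care is justifying that the orthogonal projection onto $S$ is legitimate, which is exactly why the boundedness of $k$ (ensuring $\abbrvmm{\pp{P}_i}\in\hbspace$) is needed.
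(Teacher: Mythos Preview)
Your proposal is correct and follows essentially the same argument as the paper: decompose $f$ orthogonally along the span of the mean embeddings, observe that the loss depends only on the projected part via the reproducing property $\mathbb{E}_{\pp{P}_i}[f]=\langle\abbrvmm{\pp{P}_i},f\rangle_{\hbspace}$, and then use the Pythagorean identity together with strict monotonicity of $\Omega$ to force the orthogonal component to vanish. Your explicit remark on why \emph{strict} monotonicity is needed to conclude that \emph{every} minimizer lies in $S$ is a nice clarification that the paper leaves implicit.
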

 

Theorem \ref{thm:representer} clearly indicates how each distribution contributes to the minimizer of \eqref{eq:regfunc}. Roughly speaking, the coefficients $\alpha_i$ controls the contribution of the distributions through the mean embeddings $\abbrvmm{\pp{P}_i}$. Furthermore, if we restrict $\pspace$ to a class of Dirac measures $\delta_x$ on $\inspace$ and consider the training set $\{(\delta_{x_i},y_i)\}_{i=1}^m$, the functional \eqref{eq:regfunc} reduces to the usual regularization functional \cite{Schoelkopf01:Representer} and the solution reduces to $f=\sum_{i=1}^m\alpha_ik(x_i,\cdot)$. Therefore, the standard representer theorem is recovered as a particular case (see also \cite{Dinuzzo12:representer} for more general results on representer theorem).

Note that, on the one hand, the minimization problem \eqref{eq:regfunc} is different from minimizing the functional $\mathbb{E}_{\pp{P}_1}\dotsc\mathbb{E}_{\pp{P}_m}\ell(x_1,y_1,f(x_1),\dotsc,x_m,y_m,f(x_m)) + \Omega(\|f\|_{\hbspace})$ for the special case of the additive loss $\ell$. Therefore, the solution of our regularization problem is different from what one would get in the limit by training on an infinitely many points sampled from $\pp{P}_1,\dotsc,\pp{P}_m$. On the other hand, it is also different from minimizing the functional $\ell(M_1,y_1,f(M_1),\dotsc,M_m,y_m,f(M_m))+\Omega(\|f\|_{\hbspace})$ where $M_i = \mathbb{E}_{x\sim\pp{P}_i}[x]$. In a sense, our framework is something in between.

\section{Kernels on probability distributions} 
\label{sec:distkernel} 

As the map \eqref{eq:meanmap} is linear in $\pspace$, optimizing the functional \eqref{eq:regfunc} amounts to finding a function in $\hbspace$ that approximate well functions from $\pspace$ to $\mathbb{R}$ in the function class $\mathcal{F} \triangleq \{\pp{P} \rightarrow \int_{\inspace}g\dd\pp{P}\,| \,\pp{P}\in\pspace,\,g\in C(\inspace)\}$ where $C(\inspace)$ is a class of bounded continuous functions on $\inspace$. Since $\delta_x\in\pspace$ for any $x\in\inspace$, it follows that $C(\inspace)\subset \mathcal{F} \subset C(\pspace)$ where $C(\pspace)$ is a class of bounded continuous functions on $\pspace$ endowed with the topology of weak convergence and the associated Borel $\sigma$-algebra. The following lemma states the relation between the RKHS $\hbspace$ induced by the kernel $k$ and the function class $\mathcal{F}$.

\begin{mylem}
  \label{lem:universal-linear}
  Assuming that $\inspace$ is compact, the RKHS $\hbspace$ induced by a kernel $k$ is dense in $\mathcal{F}$ if $k$ is universal, i.e., for every function $F\in\mathcal{F}$ and every $\varepsilon > 0$ there exists a function $g\in\hbspace$ with $\sup_{\pp{P}\in\pspace}\abs{F(\pp{P}) - \int g \dd\pp{P}} \leq \varepsilon$.
\end{mylem}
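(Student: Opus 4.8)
The plan is to reduce the claimed uniform-over-distributions approximation to an ordinary uniform approximation of a single continuous function, which is exactly what universality of $k$ delivers. Fix an arbitrary $F\in\mathcal{F}$; by the definition of $\mathcal{F}$ there is a bounded continuous function $f_0\in C(\inspace)$ with $F(\pp{P})=\int_\inspace f_0\dd\pp{P}$ for every $\pp{P}\in\pspace$. For a candidate approximant $g\in\hbspace$, the target quantity is $\sup_{\pp{P}}\abs{\int_\inspace f_0\dd\pp{P}-\int_\inspace g\dd\pp{P}}$, and the aim is to bound it by a quantity that does not depend on $\pp{P}$.

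The key step is the pointwise-to-uniform estimate
\[
\abs*{\int_\inspace f_0\dd\pp{P}-\int_\inspace g\dd\pp{P}}
= \abs*{\int_\inspace (f_0-g)\dd\pp{P}}
\le \int_\inspace\abs{f_0-g}\dd\pp{P}
\le \sup_{x\in\inspace}\abs{f_0(x)-g(x)}
= \norm{f_0-g}_\infty ,
\]
which holds for \emph{every} probability measure $\pp{P}$ precisely because $\pp{P}(\inspace)=1$. Taking the supremum over $\pp{P}\in\pspace$ on the left leaves the right-hand side untouched, so $\sup_{\pp{P}\in\pspace}\abs{F(\pp{P})-\int g\dd\pp{P}}\le\norm{f_0-g}_\infty$. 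This collapses the whole problem to approximating the single function $f_0$ in the supremum norm on $\inspace$.

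Next I would invoke the standard notion of a universal kernel: since $\inspace$ is compact and $k$ is universal, the RKHS $\hbspace$ is dense in $C(\inspace)$ with respect to $\norm{\cdot}_\infty$. Hence, given $\varepsilon>0$, there exists $g\in\hbspace$ with $\norm{f_0-g}_\infty\le\varepsilon$. Chaining this with the previous estimate gives $\sup_{\pp{P}\in\pspace}\abs{F(\pp{P})-\int g\dd\pp{P}}\le\varepsilon$, which is exactly the asserted density of $\hbspace$ in $\mathcal{F}$.

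I expect no genuine obstacle in the chain of inequalities, which is elementary; the care required is bookkeeping rather than depth. The two points to make explicit are, first, that compactness of $\inspace$ is what singles out $C(\inspace)$ as the correct ambient space and underlies the chosen notion of universality (density of $\hbspace$ in $C(\inspace)$ under the sup-norm), and second, that every $g\in\hbspace$ is itself continuous and bounded, so that $\int_\inspace g\dd\pp{P}$ is well defined and belongs to the same class of linear functionals as $F$; this follows from $\hbspace\subset C(\inspace)$ together with the boundedness of $k(x,\cdot)$ already assumed in the paper.
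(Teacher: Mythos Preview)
Your proof is correct and follows essentially the same approach as the paper: reduce the uniform-over-$\pspace$ estimate to a sup-norm bound on $\inspace$ via $\abs{\int(f_0-g)\dd\pp{P}}\le\norm{f_0-g}_\infty$, then invoke universality of $k$ to approximate $f_0$ in $C(\inspace)$. The paper compresses the first step into the phrase ``by linearity of $\mathcal{F}$'', which you have spelled out explicitly.
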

\vspace{-5pt}
\begin{proof}
  Assume that $k$ is universal. Then, for every function $f\in C(\inspace)$ and every $\varepsilon >0$ there exists a function $g\in\hbspace$ induced by $k$ with $\sup_{x\in\inspace}\abs{f(x)-g(x)}\leq\varepsilon$ \cite{Steinwart01universal}. Hence, by linearity of $\mathcal{F}$, for every $F\in\mathcal{F}$ and every $\varepsilon > 0$ there exists a function $h\in\hbspace$ such that $\sup_{\pp{P}\in\pspace}\abs{F(\pp{P}) - \int h \dd\pp{P}} \leq \varepsilon$.
\end{proof}
\vspace{-5pt}

Nonlinear kernels on $\pspace$ can be defined in an analogous way to nonlinear kernels on $\inspace$, by treating mean embeddings $\abbrvmm{\pp{P}}$ of $\pp{P}\in\pspace$ as its feature representation. First, assume that the map \eqref{eq:meanmap} is injective and let $\langle\cdot,\cdot\rangle_{\pspace}$ be an inner product on $\pspace$. By linearity, we have $\langle\pp{P},\pp{Q}\rangle_{\pspace} = \langle\abbrvmm{\pp{P}},\abbrvmm{\pp{Q}}\rangle_{\hbspace}$ (cf. \cite{Bertinet04:RKHS} for more details). Then, the nonlinear kernels on $\pspace$ can be defined as $K(\pp{P},\pp{Q})= \kappa(\abbrvmm{\pp{P}},\abbrvmm{\pp{Q}}) = \langle\psi(\abbrvmm{\pp{P}}),\psi(\abbrvmm{\pp{Q}})\rangle_{\hbspace_{\kappa}}$ where $\kappa$ is a p.d. kernel. As a result, many standard nonlinear kernels on $\inspace$ can be used to define nonlinear kernels on $\pspace$ as long as the kernel evaluation depends entirely on the inner product $\langle\abbrvmm{\pp{P}},\abbrvmm{\pp{Q}}\rangle_{\hbspace}$, e.g., $K(\pp{P},\pp{Q})=(\langle\abbrvmm{\pp{P}},\abbrvmm{\pp{Q}}\rangle_{\hbspace} + c)^d$. Although requiring more computational effort, their practical use is simple and flexible. Specifically, the notion of p.d. kernels on distributions proposed in this work is so generic that standard kernel functions can be reused to derive kernels on distributions that are different from many other kernel functions proposed specifically for certain distributions.

It has been recently proved that the Gaussian RBF kernel given by $K(\pp{P},\pp{Q}) = \exp(-\frac{\gamma}{2}\|\abbrvmm{\pp{P}}-\abbrvmm{\pp{Q}}\|_{\hbspace}^2), \; \forall \pp{P},\pp{Q}\in\pspace$ is universal w.r.t $C(\pspace)$ given that $\inspace$ is compact and the map $\mu$ is injective \cite{Christmann10Universal}. Despite its success in real-world applications, the theory of kernel-based classifiers beyond the input space $\inspace\subset\mathbb{R}^d$, as also mentioned by \cite{Christmann10Universal}, is still incomplete. It is therefore of theoretical interest to consider more general classes of universal kernels on probability distributions.

\subsection{Support measure machines}

This subsection extends SVMs to deal with probability distributions, leading to \emph{support measure machines} (SMMs). In its general form, an SMM amounts to solving an SVM problem with the expected kernel $K(\pp{P},\pp{Q}) = \mathbb{E}_{x\sim\pp{P},z\sim\pp{Q}}[k(x,z)]$. This kernel can be computed in closed-form for certain classes of distributions and kernels $k$. Examples are given in Table 
\ref{tab:expected-kernel}. 

\begin{table*}[t!]
  \centering
  \caption{the analytic forms of expected kernels for different choices of kernels and distributions.}
  \resizebox{\linewidth}{!}{
  \begin{tabular}{lll} 
    \toprule
    \textbf{Distributions} & \textbf{Embedding kernel} $k(x,y)$ & $K(\pp{P}_i,\pp{P}_j)=\langle\abbrvmm{\pp{P}_i},\abbrvmm{\pp{P}_j}\rangle_{\hbspace}$\\
    \midrule
    Arbitrary $\pp{P}(m;\Sigma)$ & Linear $\langle x,y \rangle$ & $m_i^{\mathsf{T}}m_j + \delta_{ij}\text{tr }
    \Sigma_i$ \\
    Gaussian $\mathcal{N}(m;\Sigma)$ & Gaussian RBF $\exp(-\frac{\gamma}{2}\|x-y\|^2)$ 
    & $\exp(-\frac{1}{2}(m_i-m_j)^{\mathsf{T}}(\Sigma_i+\Sigma_j + \gamma^{-1}\mathbf{I})^{-1}
    (m_i-m_j)) $ \\
    && $/\abs{\gamma\Sigma_i + \gamma\Sigma_j + \mathbf{I}}^{\frac{1}{2}}$\\
    Gaussian $\mathcal{N}(m;\Sigma)$ & Polynomial degree 2 $(\langle x,y\rangle + 1)^2$ 
    &  $(\langle m_i,m_j\rangle +1)^2 + \text{tr }\Sigma_i\Sigma_j + m_i^{\mathsf{T}}\Sigma_jm_i
    + m_j^{\mathsf{T}}\Sigma_im_j$ \\ 
    Gaussian $\mathcal{N}(m;\Sigma)$ & Polynomial degree 3 $(\langle x,y\rangle + 1)^3$ 
    & $(\langle m_i,m_j\rangle +1)^3 + 6m_i^{\mathsf{T}}\Sigma_i\Sigma_jm_j $ \\
    && $+ 3(\langle m_i,m_j\rangle +1)(\text{tr }\Sigma_i\Sigma_j + m_i^{\mathsf{T}}\Sigma_jm_i + m_j^{\mathsf{T}}\Sigma_im_j)$ \\
    \bottomrule
  \end{tabular}}
  \label{tab:expected-kernel}  
\end{table*}
 
Alternatively, one can approximate the kernel $K(\pp{P},\pp{Q})$ by the empirical estimate:
\begin{equation}
  \label{eq:emp-kernel}
  K_{\text{emp}}(\widehat{\mathbb{P}}_n,\widehat{\mathbb{Q}}_m) 
  = \frac{1}{n\cdot m}\sum_{i=1}^n\sum_{j=1}^mk(x_i,z_j)
\end{equation}
\noindent where $\widehat{\mathbb{P}}_n$ and $\widehat{\mathbb{Q}}_m$ are empirical distributions of $\pp{P}$ and $\pp{Q}$ given random samples $\{x_i\}_{i=1}^n$ and $\{z_j\}_{j=1}^m$, respectively. A finite sample of size $m$ from a distribution $\pp{P}$ suffices (with high probability) to compute an approximation within an error of $O(m^{-\frac{1}{2}})$. Instead, if the sample set is sufficiently large, one may choose to approximate the true distribution by simpler probabilistic models, e.g., a mixture of Gaussians model, and choose a kernel $k$ whose expected value admits an analytic form. Storing only the parameters of probabilistic models may save some space compared to storing all data points. 


Note that the standard SVM feature map $\phi(x)$ is usually nonlinear in $x$, whereas $\abbrvmm{\pp{P}}$ is \emph{linear} in $\pp{P}$. Thus, for an SMM, the first level kernel $k$ is used to obtain a vectorial representation of the measures, and the second level kernel $K$ allows for a nonlinear algorithm on distributions. For clarity, we will refer to $k$ and $K$ as the \textbf{embedding kernel} and the \textbf{level-2 kernel}, respectively


\section{Theoretical analyses}
\label{sec:relation} 

This section presents key theoretical aspects of the proposed framework, which reveal important connection between kernel-based learning algorithms on the space of distributions and on the input space on which they are defined. 

\subsection{Risk deviation bound}

Given a training sample $\{(\pp{P}_i,y_i)\}_{i=1}^m$ drawn i.i.d. from some unknown probability distribution $\mathcal{P}$ on $\pspace\times\mathcal{Y}$, a loss function $\ell:\mathbb{R}\times\mathbb{R}\rightarrow\mathbb{R}$, and a function class $\Lambda$, the goal of statistical learning is to find the function $f\in\Lambda$ that minimizes the expected risk functional $\mathcal{R}(f)=\int_{\pspace}\int_{\inspace}\ell(y,f(x))\dd\pp{P}(x)\dd\mathcal{P}(\pp{P},y)$. Since $\mathcal{P}$ is unknown, the empirical risk $\mathcal{R}_{\text{emp}}(f)=\frac{1}{m}\sum_{i=1}^m\int_{\inspace}\ell(y_i,f(x))\dd\pp{P}_i(x)$ based on the training sample is considered instead. Furthermore, the risk functional can be simplified further by considering $\frac{1}{m\cdot n}\sum_{i=1}^m\sum_{x_{ij}\sim\pp{P}_i}\ell(y_i,f(x_{ij}))$ based on $n$ samples $x_{ij}$ drawn from each $\pp{P}_i$.

Our framework, on the other hand, alleviates the problem by minimizing the risk functional $\mathcal{R}^{\mu}(f) = \int_{\pspace} \ell(y,\mathbb{E}_{\pp{P}}[f(x)])\dd\mathcal{P}(\pp{P},y)$ for $f\in\hbspace$ with corresponding empirical risk functional $\mathcal{R}^{\mu}_{\text{emp}}(f)=\frac{1}{m}\sum_{i=1}^m\ell(y_i,\mathbb{E}_{\pp{P}_i}[f(x)])$ (cf. the discussion at the end of Section \ref{sec:regularization}). It is often easier to optimize $\mathcal{R}^{\mu}_{\text{emp}}(f)$ as the expectation can be computed exactly for certain choices of $\pp{P}_i$ and $\hbspace$. Moreover, for universal $\hbspace$, this simplification preserves all information of the distributions. Nevertheless, there is still a loss of information due to the loss function $\ell$.

Due to the i.i.d. assumption, the analysis of the difference between $\mathcal{R}$ and $\mathcal{R}^{\mu}$ can be simplified w.l.o.g. to the analysis of the difference between $\mathbb{E}_{\pp{P}}[\ell(y,f(x))]$ and $\ell(y,\mathbb{E}_{\pp{P}}[f(x)])$ for a particular distribution $\pp{P}\in\pspace$. The theorem below provides a bound on the difference between $\mathbb{E}_{\pp{P}}[\ell(y,f(x))]$ and $\ell(y,\mathbb{E}_{\pp{P}}[f(x)])$.


\begin{mythm}
  \label{thm:deviation}
  Given an arbitrary probability distribution $\pp{P}$ with variance $\sigma^2$, a Lipschitz continuous function $f:\mathbb{R}\rightarrow\mathbb{R}$ with constant $C_f$, an arbitrary loss function $\ell : \mathbb{R}\times\mathbb{R}\rightarrow\mathbb{R}$ that is Lipschitz continuous in the second argument with constant $C_{\ell}$, it follows that $\abs{\mathbb{E}_{x\sim\pp{P}}[\ell(y,f(x))]  - \ell(y,\mathbb{E}_{x\sim\pp{P}}[f(x)])} \leq 2C_{\ell}C_f\sigma$ for any $y\in\mathbb{R}$.
\end{mythm}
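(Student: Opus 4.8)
The plan is to peel off the two Lipschitz hypotheses one at a time, reducing the claim to a purely one-dimensional statement about how far $f$ spreads out under $\pp{P}$. Write $\bar f \triangleq \mathbb{E}_{x\sim\pp{P}}[f(x)]$ and note that $\ell(y,\bar f)$ does not depend on $x$, so it may be absorbed into the expectation: $\mathbb{E}_{x\sim\pp{P}}[\ell(y,f(x))] - \ell(y,\bar f) = \mathbb{E}_{x\sim\pp{P}}[\ell(y,f(x)) - \ell(y,\bar f)]$. Applying Jensen's inequality to bring the absolute value inside the expectation, and then the Lipschitz continuity of $\ell$ in its second argument, gives
\[
\abs{\mathbb{E}_{x\sim\pp{P}}[\ell(y,f(x))] - \ell(y,\bar f)} \;\leq\; \mathbb{E}_{x\sim\pp{P}}\abs{\ell(y,f(x)) - \ell(y,\bar f)} \;\leq\; C_{\ell}\,\mathbb{E}_{x\sim\pp{P}}\abs{f(x) - \bar f}.
\]
This holds for every $y$, since $y$ is merely a frozen first argument of $\ell$ throughout. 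The remaining task is then to bound the mean absolute deviation $\mathbb{E}_{x\sim\pp{P}}\abs{f(x) - \bar f}$ by $2C_f\sigma$.

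For that step I would introduce the value $f(m)$ of $f$ at the mean $m \triangleq \mathbb{E}_{x\sim\pp{P}}[x]$ (so $\sigma^2 = \mathbb{E}_{x\sim\pp{P}}[(x-m)^2]$) as an intermediate anchor and split by the triangle inequality, $\mathbb{E}\abs{f(x) - \bar f} \leq \mathbb{E}\abs{f(x) - f(m)} + \abs{f(m) - \bar f}$, where all expectations are over $x\sim\pp{P}$. The first term is handled directly by the Lipschitz property of $f$ followed by Cauchy--Schwarz (equivalently Jensen), giving $\mathbb{E}\abs{f(x) - f(m)} \leq C_f\,\mathbb{E}\abs{x-m} \leq C_f\sqrt{\mathbb{E}(x-m)^2} = C_f\sigma$. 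For the second term, because $\bar f = \mathbb{E}[f(x)]$, I write $\abs{f(m) - \bar f} = \abs{\mathbb{E}[f(m) - f(x)]} \leq \mathbb{E}\abs{f(m) - f(x)} \leq C_f\,\mathbb{E}\abs{x-m} \leq C_f\sigma$ by the same estimate. Adding the two pieces yields $\mathbb{E}\abs{f(x)-\bar f} \leq 2C_f\sigma$, which combined with the first display produces the claimed bound $2C_{\ell}C_f\sigma$.

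The only genuine subtlety --- and the source of the factor $2$ --- is that $\bar f = \mathbb{E}[f(x)]$ is \emph{not} $f(m)$, since $f$ need not be affine, so one cannot directly apply $\abs{f(x)-f(m)} \leq C_f\abs{x-m}$ to a deviation measured against $\bar f$. Routing through $f(m)$ resolves this at the cost of doubling the constant. I would remark that a slightly sharper route, using an independent copy $x'\sim\pp{P}$ and writing $\mathbb{E}_x\abs{f(x)-\bar f} = \mathbb{E}_x\abs{\mathbb{E}_{x'}[f(x)-f(x')]} \leq C_f\,\mathbb{E}_{x,x'}\abs{x-x'} \leq C_f\sqrt{2}\,\sigma$, replaces $2$ by $\sqrt{2}$; but the triangle-inequality argument already suffices for the stated theorem and avoids introducing a second expectation, so I would present that version. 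Every remaining inequality is a routine application of Jensen or Cauchy--Schwarz.
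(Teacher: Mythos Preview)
Your proof is correct and follows essentially the same route as the paper: pull the absolute value inside the expectation, peel off $C_\ell$ via Lipschitz continuity of $\ell$, then split $\mathbb{E}\abs{f(x)-\bar f}$ through the anchor $f(m)$ and bound each piece by $C_f\sigma$ using Lipschitz continuity of $f$ together with $\mathbb{E}\abs{x-m}\le\sqrt{\mathbb{E}(x-m)^2}=\sigma$. Your added remark that an independent-copy argument sharpens $2$ to $\sqrt{2}$ is a nice bonus not present in the paper.
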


Theorem \ref{thm:deviation} indicates that if the random variable $x$ is concentrated around its mean and the function $f$ and $\ell$ are well-behaved, i.e., Lipschitz continuous, then the loss deviation $|\mathbb{E}_{\pp{P}}[\ell(y,f(x))] - \ell(y,\mathbb{E}_{\pp{P}}[f(x)])|$ will be small. As a result, if this holds for any distribution $\pp{P}_i$ in the training set $\{(\pp{P}_i,y_i)\}_{i=1}^m$, the true risk deviation $\abs{\mathcal{R}-\mathcal{R}^{\mu}}$ is also expected to be small. 

\subsection{Flexible support vector machines}
\label{sec:equivalence}

It turns out that, for certain choices of distributions $\pp{P}$, the linear SMM trained using $\{(\pp{P}_i,y_i)\}_{i=1}^m$ is equivalent to an SVM trained using some samples $\{(x_i,y_i)\}_{i=1}^m$ with an appropriate choice of kernel function. 
 
\begin{mylem}
  \label{lem:smm-svm} 
  Let $k(x,z)$ be a bounded p.d. kernel on a measure space such that $\iint k(x,z)^2\dd x\dd z < \infty$, and   $g(x,\tilde{x})$ be a square integrable function such that $\int g(x,\tilde{x})\dd\tilde{x} < \infty$ for all $x$. Given a sample $\{(\pp{P}_i,y_i)\}_{i=1}^m$ where each $\pp{P}_i$ is assumed to have a density given by $g(x_i,x)$, the linear SMM is equivalent to the SVM on the training sample $\{(x_i,y_i)\}_{i=1}^m$ with kernel $K_g(x,z)=\iint k(\tilde{x},\tilde{z})g(x,\tilde{x}) g(z,\tilde{z})\dd\tilde{x}\dd\tilde{z}$. 
\end{mylem}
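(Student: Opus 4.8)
The plan is to reduce the claimed equivalence to an equality of kernel matrices, since both the linear SMM and the SVM are instances of the same soft-margin quadratic program that depends on the training data only through its Gram matrix and the labels $y_i$. By the construction in Section \ref{sec:distkernel}, the linear SMM on $\{(\pp{P}_i,y_i)\}_{i=1}^m$ is an SVM whose level-2 kernel is the expected kernel $K(\pp{P}_i,\pp{P}_j)=\langle\abbrvmm{\pp{P}_i},\abbrvmm{\pp{P}_j}\rangle_{\hbspace}=\iint k(x,z)\dd\pp{P}_i(x)\dd\pp{P}_j(z)$. Hence it suffices to show that, under the density assumption $\dd\pp{P}_i(x)=g(x_i,x)\dd x$, this quantity equals $K_g(x_i,x_j)$ for every pair $i,j$.

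First I would substitute the densities and interchange the order of integration to obtain
\[
K(\pp{P}_i,\pp{P}_j)=\iint k(\tilde x,\tilde z)\,g(x_i,\tilde x)\,g(x_j,\tilde z)\dd\tilde x\dd\tilde z = K_g(x_i,x_j).
\]
The conceptual content of the lemma is exactly this substitution; the only real work—more bookkeeping than depth, but the main technical point—is justifying that the integrals converge and that Fubini applies, which is where the integrability hypotheses are used. Since $k$ is bounded, say $\abs{k}\le M$, the absolute integrand is dominated by $M\,\abs{g(x_i,\tilde x)}\,\abs{g(x_j,\tilde z)}$, whose double integral is finite because $\int g(x,\tilde x)\dd\tilde x<\infty$ (indeed $g(x_i,\cdot)$ is a density). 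The square-integrability of $g$ together with $\iint k(x,z)^2\dd x\dd z<\infty$ further guarantees that these integrals define a genuine kernel rather than a merely formal expression.

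Next I would verify that $K_g$ is a valid positive definite kernel on $\inspace$, so that the SVM with kernel $K_g$ is well posed. I would exhibit the feature map $\Phi(x)=\int k(\tilde x,\cdot)\,g(x,\tilde x)\dd\tilde x\in\hbspace$, which is precisely the mean embedding of the distribution with density $g(x,\cdot)$; the Bochner integral converges because $\norm{k(\tilde x,\cdot)}_{\hbspace}=\sqrt{k(\tilde x,\tilde x)}$ is bounded and $g(x,\cdot)$ is integrable. Then $K_g(x,z)=\langle\Phi(x),\Phi(z)\rangle_{\hbspace}$, which is manifestly positive definite.

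Finally, having shown that the two matrices $[K(\pp{P}_i,\pp{P}_j)]_{ij}$ and $[K_g(x_i,x_j)]_{ij}$ coincide and carry the same label vector, I would conclude that the two quadratic programs are identical. They therefore admit the same optimal dual coefficients $\alpha_i$, and the learned predictors agree—$f=\sum_{i}\alpha_i\abbrvmm{\pp{P}_i}$ for the SMM and $f=\sum_{i}\alpha_i K_g(x_i,\cdot)$ for the SVM produce the same decision values—which establishes the asserted equivalence.
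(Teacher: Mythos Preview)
Your argument is correct and rests on the same core identity as the paper's proof, namely $\langle\abbrvmm{\pp{P}_i},\abbrvmm{\pp{P}_j}\rangle_{\hbspace}=K_g(x_i,x_j)$ once the densities $g(x_i,\cdot)$ are substituted. The route differs slightly: the paper works in the primal, invoking the standard representer theorem for the SVM and Theorem~\ref{thm:representer} for the SMM to write both minimizers as $f=\sum_j\alpha_j\abbrvmm{\pp{P}_j}$, and then checks that both the data term $\langle\abbrvmm{\pp{P}_i},f\rangle_{\hbspace}$ and the regularizer $\|f\|_{\hbspace}^2$ reduce to the same expressions $\sum_j\alpha_jK_g(x_i,x_j)$ and $\sum_{i,j}\alpha_i\alpha_jK_g(x_i,x_j)$. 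You instead invoke that the linear SMM is, by construction, an SVM with the expected kernel, and conclude equivalence directly from equality of Gram matrices in the dual QP. Your version is a bit more economical (it does not need Theorem~\ref{thm:representer}); the paper's version makes the matching of the primal objectives explicit and shows that the learned functions themselves coincide, not just the dual variables.
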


Note that the important assumption for this equivalence is that the distributions $\pp{P}_i$ differ only in their location in the parameter space. This need not be the case in all possible applications of SMMs.

Furthermore, we have $K_g(x,z)=\left\langle \int k(\tilde{x},\cdot)g(x,\tilde{x}) \dd\tilde{x} , \int k(\tilde{z},\cdot)g(z,\tilde{z})\dd\tilde{z}\right\rangle_{\hbspace}$. Thus, it is clear that the feature map of $x$ depends not only on the kernel $k$, but also on the density $g(x,\tilde{x})$. Consequently, by virtue of Lemma \ref{lem:smm-svm}, the kernel $K_g$ allows the SVM to place different kernels at each data point. We call this algorithm a \emph{flexible SVM} (Flex-SVM).

Consider  for example the linear SMM with Gaussian distributions $\mathcal{N}(x_1;\sigma^2_1\cdot\mathbf{I}),\dotsc,\mathcal{N}(x_m;\sigma^2_m\cdot\mathbf{I})$ and Gaussian RBF kernel $k_{\sigma^2}$ with bandwidth parameter $\sigma$. The convolution theorem of Gaussian distributions implies that this SMM is equivalent to a flexible SVM that places a data-dependent kernel $k_{\sigma^2+2\sigma^2_i}(x_i,\cdot)$ on training example $x_i$, i.e., a Gaussian RBF kernel with larger bandwidth.



\section{Related works}
\label{sec:relatedworks}

The kernel $K(\pp{P},\pp{Q}) = \langle \abbrvmm{\pp{P}},\abbrvmm{\pp{Q}}\rangle_{\hbspace}$ is in fact a special case of the Hilbertian metric \cite{Hein05Hilbertian}, with the associated kernel $K(\pp{P},\pp{Q}) = \mathbb{E}_{\pp{P},\pp{Q}}[k(x,\tilde{x})]$, and a generative mean map kernel (GMMK) proposed by \cite{Mehta10Generative}. In the GMMK, the kernel between two objects $x$ and $y$ is defined via $\hat{p}_x$ and $\hat{p}_y$, which are estimated probabilistic models of $x$ and $y$, respectively. That is, a probabilistic model $\hat{p}_x$ is learned for each example and used as a surrogate to construct the kernel between those examples. The idea of surrogate kernels has also been adopted by the Probability Product Kernel (PPK) \cite{Jebara04Probabilityproduct}. In this case, we have $K_{\rho}(p,p')=\int_{\inspace}p(x)^{\rho}p'(x)^{\rho}\dd x$, which has been shown to be a special case of GMMK when $\rho=1$ \cite{Mehta10Generative}. Consequently, GMMK, PPK with $\rho=1$, and our linear kernels are equivalent when the embedding kernel is $k(x,x')=\delta(x-x')$. More recently, the empirical kernel \eqref{eq:emp-kernel} was employed in an unsupervised way for multi-task learning to generalize to a previously unseen task \cite{Blanchard11Generalize}. In contrast, we treat the probability distributions in a supervised way (cf. the regularized functional \eqref{eq:regfunc}) and the kernel is not restricted to only the empirical kernel.

The use of expected kernels in dealing with the uncertainty in the input data has a connection to robust SVMs. For instance, a generalized form of the SVM in \cite{Shivaswamy06SOCP} incorporates the probabilistic uncertainty into the maximization of the margin. This results in a second-order cone programming (SOCP) that generalizes the standard SVM. In SOCP, one needs to specify the parameter $\tau_i$ that reflects the probability of correctly classifying the $i$th training example. The parameter $\tau_i$ is therefore closely related to the parameter $\sigma_i$, which specifies the variance of the distribution centered at the $i$th example. \cite{Anderson11Missing} showed the equivalence between SVMs using expected kernels and SOCP when $\tau_i=0$. When $\tau_i > 0$, the mean and covariance of missing kernel entries have to be estimated explicitly, making the SOCP more involved for nonlinear kernels. Although achieving comparable performance to the standard SVM with expected kernels, the SOCP requires a more computationally extensive SOCP solver, as opposed to simple quadratic programming (QP).

 
\section{Experimental results}
\label{sec:experiments}

In the experiments, we primarily consider three different learning algorithms:
\begin{inparaenum}[i)]
\item \textbf{SVM} is considered as a baseline algorithm.
\item \textbf{Augmented SVM (ASVM)} is an SVM trained on augmented samples drawn according 
to the distributions $\{\pp{P}_i\}_{i=1}^m$. The same number of examples are drawn from each distribution.
\item \textbf{SMM} is distribution-based method that can be
applied directly on the distributions\footnote{We used the LIBSVM implementation.}.
\end{inparaenum}
  
\subsection{Synthetic data}

Firstly, we conducted a basic experiment that illustrates a fundamental difference between SVM, ASVM, and SMM. A binary classification problem of 7 Gaussian distributions with different means and covariances was considered. We trained the SVM using only the means of the distributions, ASVM with 30 virtual examples generated from each distribution, and SMM using distributions as training examples. A Gaussian RBF kernel with $\gamma = 0.25$ was used for all algorithms.

Figure \ref{fig:asvm-smm-compare} shows the resulting decision boundaries. Having been trained only on means of the distributions, the SVM classifier tends to overemphasize the regions with high densities and underrepresent the lower density regions. In contrast, the ASVM is more expensive and sensitive to outliers, especially when learning on heavy-tailed distributions. The SMM treats each distribution as a training example and implicitly incorporates properties of the distributions, i.e., means and covariances, into the classifier. Note that the SVM can be trained to achieve a similar result to the SMM by choosing an appropriate value for $\gamma$ (cf. Lemma \ref{lem:smm-svm}). Nevertheless, this becomes more difficult if the training distributions are, for example, nonisotropic and have different covariance matrices. 

Secondly, we evaluate the performance of the SMM for different combinations of embedding and level-2 kernels. Two classes of synthetic Gaussian distributions on $\mathbb{R}^{10}$ were generated. The mean parameters of the positive and negative distributions are normally distributed with means $m^+=(1,\dotsc,1)$ and $m^- = (2,\dotsc,2)$ and identical covariance matrix $\Sigma=0.5\cdot\mathbf{I}_{10}$, respectively. The covariance matrix for each distribution is generated according to two Wishart distributions with covariance matrices given by $\Sigma^+ = 0.6\cdot\mathbf{I}_{10}$ and $\Sigma^- = 1.2\cdot\mathbf{I}_{10}$ with $10$ degrees of freedom. The training set consists of 500 distributions from the positive class and 500 distributions from the negative class. The test set consists of 200 distributions with the same class proportion.

\begin{figure}[t!]
  \centering
  \begin{subfigure}[b]{0.4\linewidth}
    \includegraphics[width=2.35in]{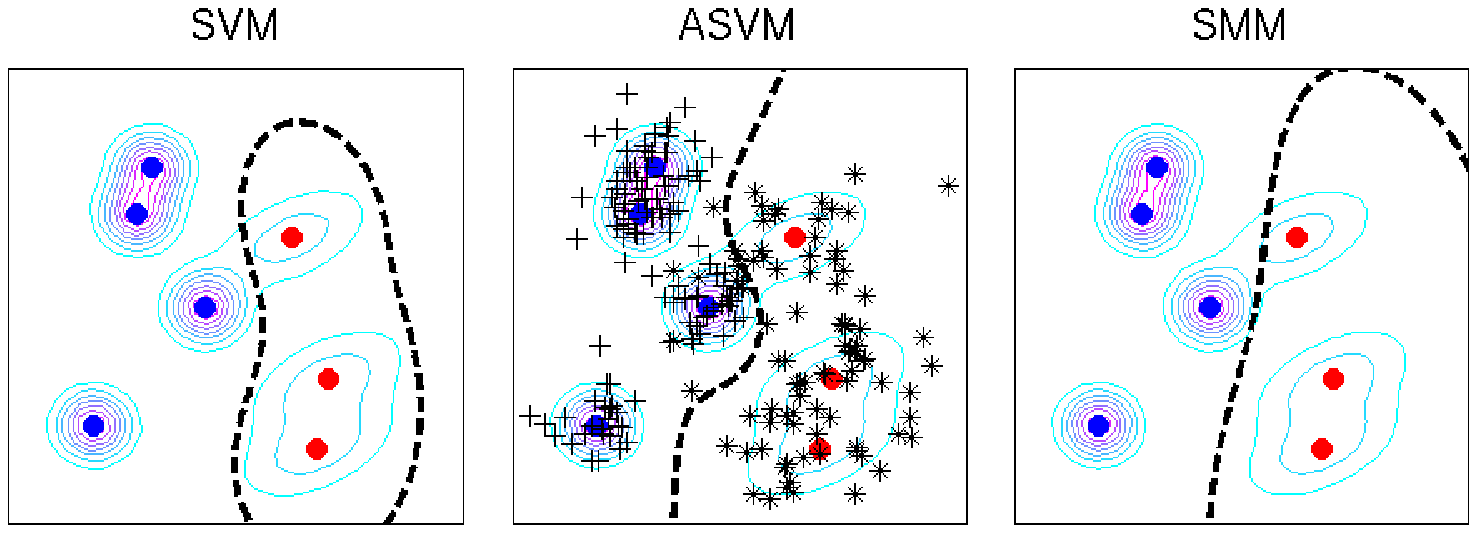} 
    \vspace{1pt}
    \caption{decision boundaries.} 
    \label{fig:asvm-smm-compare}
  \end{subfigure}
  \hfill
  \begin{subfigure}[b]{0.55\linewidth}
    \centering
    \begin{tabular}{@{}c @{}c @{}c}
      \includegraphics[width=0.9in]{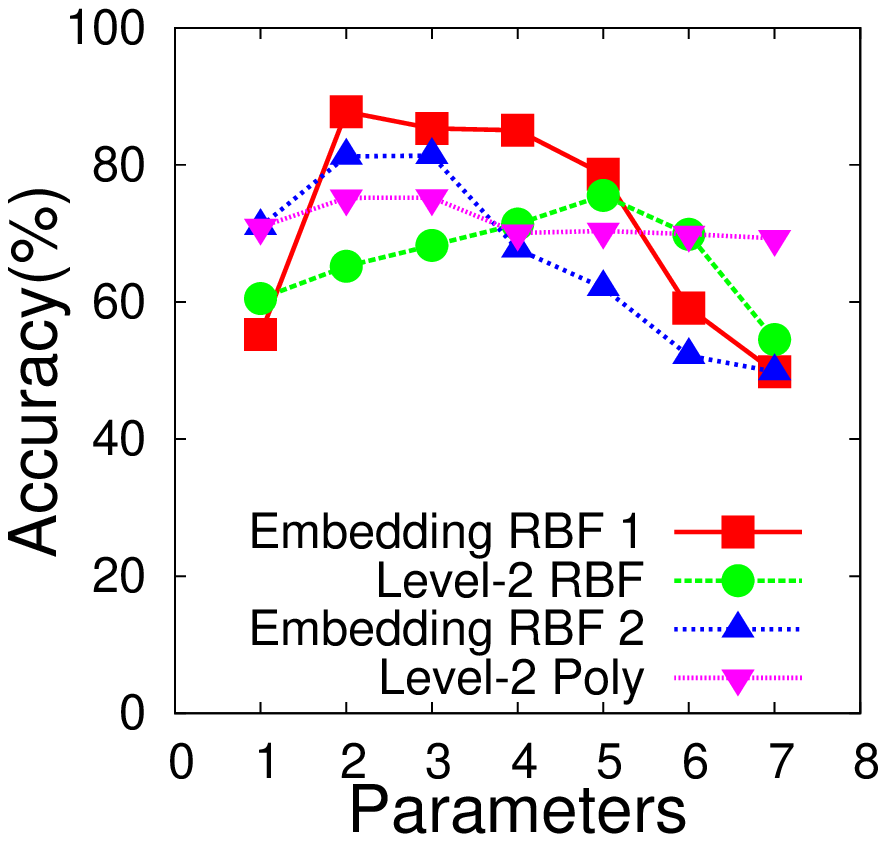} \hspace{2pt}& 
      \includegraphics[height=0.9in]{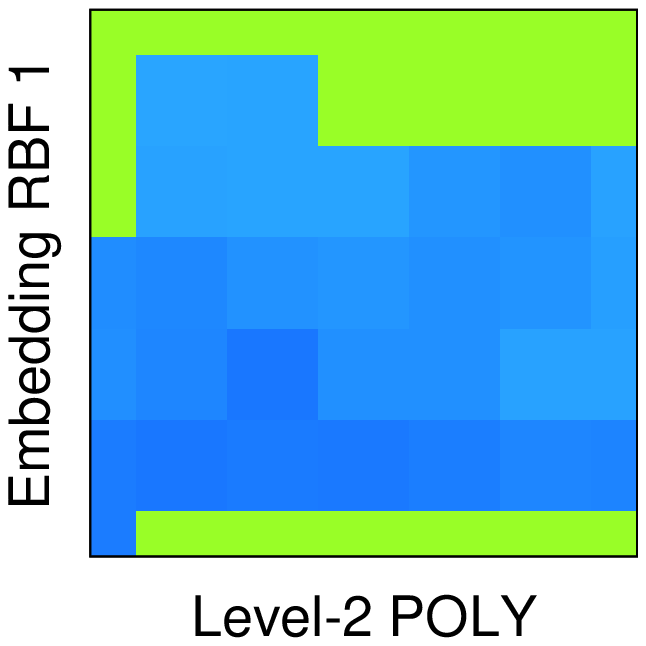} &
      \includegraphics[height=0.9in]{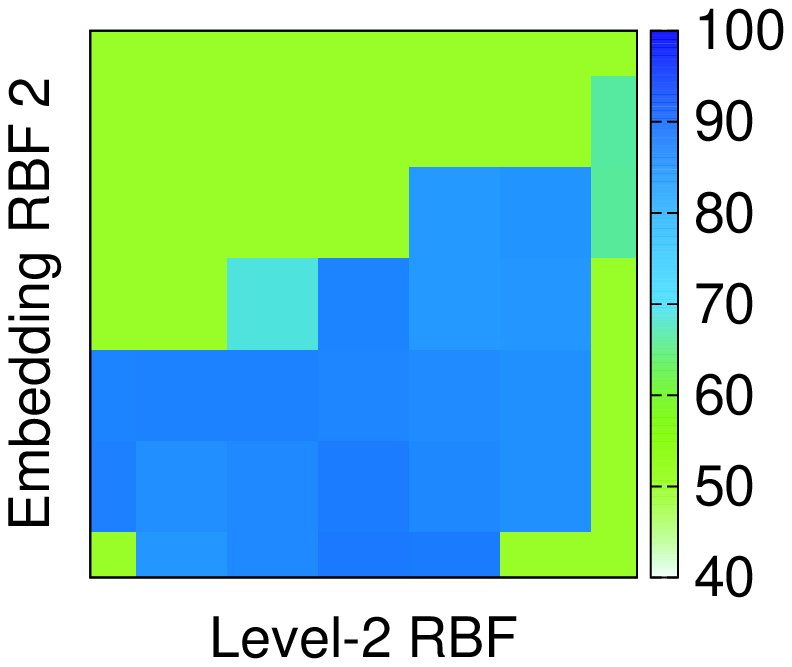}
    \end{tabular}
    \caption{sensitivity of kernel parameters} 
    \label{fig:sensitivity}     
  \end{subfigure}
  \caption{(\subref{fig:asvm-smm-compare}) the decision boundaries of SVM, ASVM, and SMM. (\subref{fig:sensitivity}) the heatmap plots of average accuracies of SMM over 30 experiments using POLY-RBF (center) and RBF-RBF (right) kernel combinations with the plots of average accuracies at different parameter values (left).}
\end{figure}



\begin{table}[t!]
  \centering
  \caption{accuracies (\%) of SMM on synthetic data with different combinations of embedding and level-2 
    kernels.}
    \begin{tabular}{l@{}llccccc}
      \toprule
      & & & \multicolumn{5}{c}{\textbf{Embedding kernels}} \\
      & & & LIN & POLY2 & POLY3 & RBF & URBF \\
      \midrule
      \multirow{3}{*}{\begin{sideways}\textbf{Level-2}\end{sideways}} 
      & \multirow{3}{*}{\begin{sideways}\textbf{kernels}\end{sideways}}
      & LIN & 85.20$\pm$2.20 & 81.04$\pm$3.11 & 81.10$\pm$2.76 & 87.74$\pm$2.19 & 85.39$\pm$2.56 \\
      & & POLY & 83.95$\pm$2.11 & 81.34$\pm$1.21 & 82.66$\pm$1.75 & 88.06$\pm$1.73 & 86.84$\pm$1.51 \\
      & & RBF & 87.80$\pm$1.96 & 73.12$\pm$3.29 & 78.28$\pm$2.19 & \textbf{89.65}$\pm$\textbf{1.37} & 86.86$\pm$1.88 \\
      \bottomrule    
    \end{tabular}
    \label{tab:kernelchoices}
\end{table}

The kernels used in the experiment include linear kernel (LIN), polynomial kernel of degree 2 (POLY2), polynomial kernel of degree 3 (POLY3), unnormalized Gaussian RBF kernel (RBF), and normalized Gaussian RBF kernel (NRBF). To fix parameter values of both kernel functions and SMM, 10-fold cross-validation (10-CV) is performed on a parameter grid, $C\in\{2^{-3},2^{-2},\dotsc,2^{7}\}$ for SMM, bandwidth parameter $\gamma\in\{10^{-3}, 10^{-2},\dotsc,10^{2}\}$ for Gaussian RBF kernels, and degree parameter $d\in\{2,3,4,5,6\}$ for polynomial kernels. The average accuracy and $\pm 1$ standard deviation for all kernel combinations over 30 repetitions are reported in Table \ref{tab:kernelchoices}. Moreover, we also investigate the sensitivity of kernel parameters for two kernel combinations: RBF-RBF and POLY-RBF. In this case, we consider the bandwidth parameter $\gamma = \{10^{-3},10^{-2},\dotsc,10^{3}\}$ for Gaussian RBF kernels and degree parameter $d=\{2,3,\dotsc,8\}$ for polynomial kernels. Figure \ref{fig:sensitivity} depicts the accuracy values and average accuracies for considered kernel functions. 


\begin{figure*}[t!]
  \centering
  \resizebox{\linewidth}{!}{
  \begin{minipage}[b]{0.57\linewidth}
    \centering
    \includegraphics[width=2.0in]{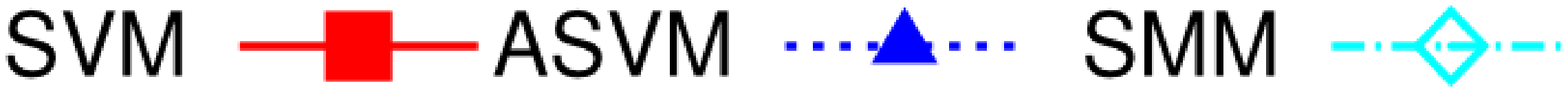}
    \includegraphics[width=3.3in]{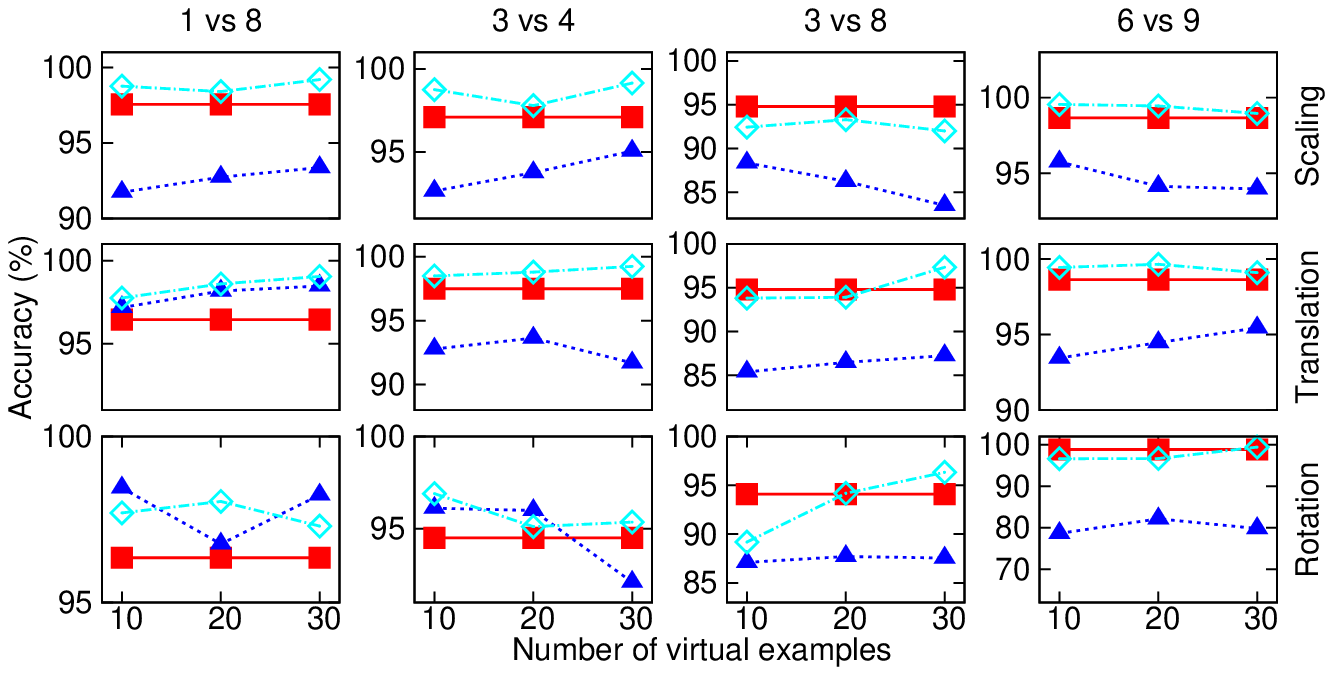}
    \caption{the performance of SVM, ASVM, and SMM algorithms on handwritten digits constructed using three 
      basic transformations.}
    \label{fig:usps-invariant}
  \end{minipage} 
  \hspace{0.6cm}
  \begin{minipage}[b]{0.4\linewidth}
    \centering
    \includegraphics[width=1.4in]{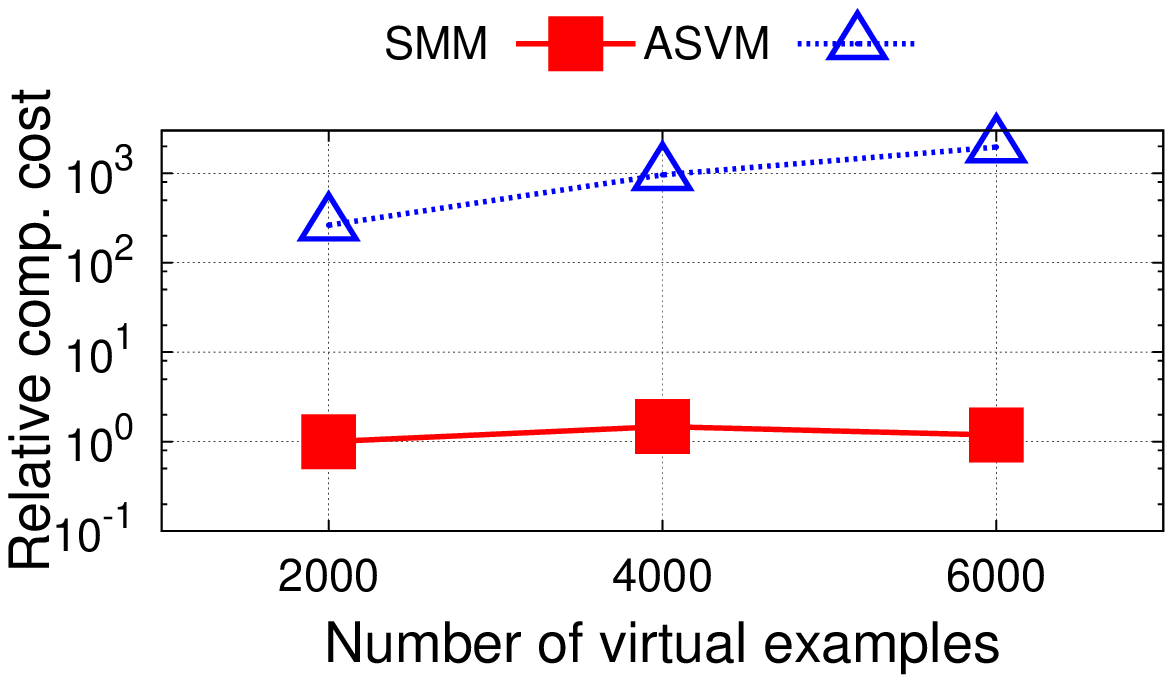} 
    \vspace{-10pt}
    \caption{relative computational cost of ASVM and SMM (baseline: SMM with 2000 virtual examples).} 
    \label{fig:usps-invariant-time} 
    \vspace{5pt}
    \includegraphics[width=1.4in]{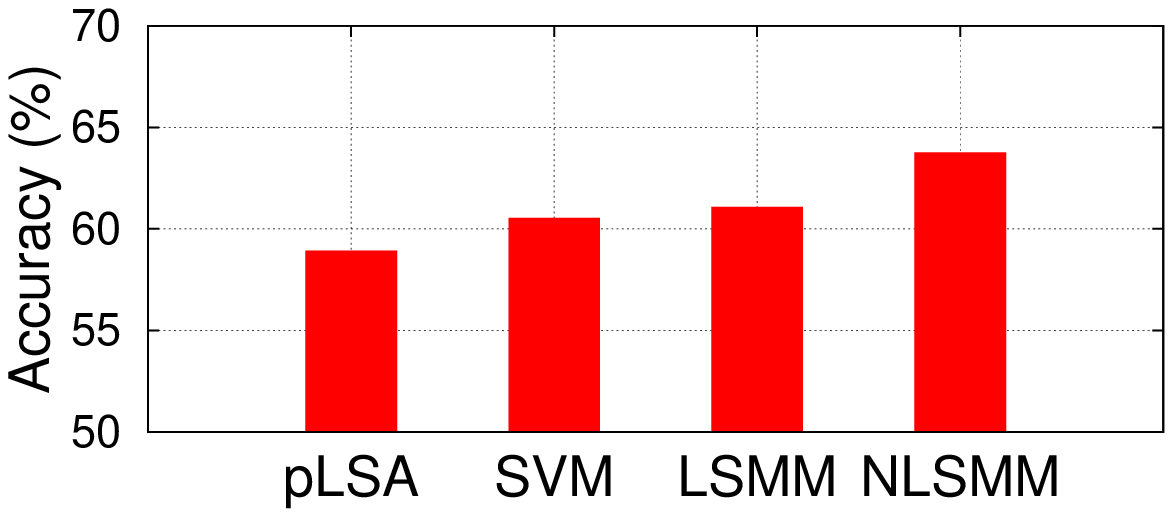}
    \vspace{-10pt}
    \caption{accuracies of four different techniques for natural scene categorization.}
    \label{fig:naturalscene-acc}
  \end{minipage} }
\end{figure*} 
  
Table \ref{tab:kernelchoices} indicates that both embedding and level-2 kernels are important for the performance of the classifier. The embedding kernels tend to have more impact on the predictive performance compared to the level-2 kernels. This conclusion also coincides with the results depicted in Figure \ref{fig:sensitivity}.  
 

\subsection{Handwritten digit recognition} 
\label{sec:handwritten}
 
In this section, the proposed framework is applied to distributions over equivalence classes of images that are invariant to basic transformations, namely, \emph{scaling}, \emph{translation}, and \emph{rotation}. We consider the handwritten digits obtained from the USPS dataset. For each $16\times 16$ 
image, the distribution over the equivalence class of the transformations is determined by a prior on parameters associated with such transformations. Scaling and translation are parametrized by the scale factors $(s_x,s_y)$ and displacements $(t_x,t_y)$ along the $x$ and $y$ axes, respectively. The rotation is parametrized by an angle $\theta$. We adopt Gaussian distributions as prior distributions, including $\mathcal{N}([1,1],0.1\cdot\mathbf{I}_2)$, $\mathcal{N}([0,0],5\cdot\mathbf{I}_2)$, and $\mathcal{N}(0;\pi)$. For each image, the virtual examples are obtained by sampling parameter values from the distribution and applying the transformation accordingly.

Experiments are categorized into simple and difficult binary classification tasks. The former consists of classifying digit 1 against digit 8 and digit 3 against digit 4. The latter considers classifying digit 3 against digit 8 and digit 6 against digit 9. The initial dataset for each task is constructed by randomly selecting 100 examples from each class. Then, for each example in the initial dataset, we generate 10, 20, and 30 virtual examples using the aforementioned transformations to construct virtual data sets consisting of 2,000, 4,000, and 6,000 examples, respectively. One third of examples in the initial dataset are used as a test set. The original examples are excluded from the virtual datasets. The virtual examples are normalized such that their feature values are in $[0,1]$. Then, to reduce computational cost, principle component analysis (PCA) is performed to reduce the dimensionality to 16. We compare the SVM on the initial dataset, the ASVM on the virtual datasets, and the SMM. For SVM and ASVM, the Gaussian RBF kernel is used. For SMM, we employ the empirical kernel \eqref{eq:emp-kernel} with Gaussian RBF kernel as a base kernel. The parameters of the algorithms are fixed by 10-CV over parameters $C\in\{2^{-3},2^{-2},\dotsc,2^7\}$ and $\gamma\in\{0.01,0.1,1\}$.

The results depicted in Figure \ref{fig:usps-invariant} clearly demonstrate the benefits of learning directly from the equivalence classes of digits under basic transformations\footnote{While the reported results were obtained using virtual examples with Gaussian parameter distributions (Sec. \ref{sec:handwritten}), we got similar results using uniform distributions.}. In most cases, the SMM outperforms both the SVM and the ASVM as the number of virtual examples increases. Moreover, Figure \ref{fig:usps-invariant-time} shows the benefit of the SMM over the ASVM in term of computational cost\footnote{The evaluation was made on a 64-bit desktop computer with Intel$^{\text{\textregistered}}$ Core$^{\text{\texttrademark}}$ 2 Duo CPU E8400 at 3.00GHz$\times$2 and 4GB of memory.}.
 
\subsection{Natural scene categorization}
 
This section illustrates benefits of the nonlinear kernels between distributions for learning natural scene categories in which the bag-of-word (BoW) representation is used to represent images in the dataset. Each image is represented as a collection of local patches, each being a codeword from a large vocabulary of codewords called codebook. Standard BoW representations encode each image as a histogram that enumerates the occurrence probability of local patches detected in the image w.r.t. those in the codebook. On the other hand, our setting represents each image as a distribution over these codewords. Thus, images of different scenes tends to generate distinct set of patches. Based on this representation, both the histogram and the local patches can be used in our framework.

We use the dataset presented in \cite{Fei-fei05:BHM}. According to their results, most errors occurs among the four indoor categories (830 images), namely, bedroom (174 images), living room (289 images), kitchen (151 images), and office (216 images). Therefore, we will focus on these four categories. For each category, we split the dataset randomly into two separate sets of images, 100 for training and the rest for testing.

A codebook is formed from the training images of all categories. Firstly, interesting keypoints in the image are randomly detected. Local patches are then generated accordingly. After patch detection, each patch is transformed into a 128-dim SIFT vector \cite{Lowe99:SIFT}. Given the collection of detected patches, K-means clustering is performed over all local patches. Codewords are then defined as the centers of the learned clusters. Then, each patch in an image is mapped to a codeword and the image can be represented by the histogram of the codewords. In addition, we also have an $M\times 128$ matrix of SIFT vectors where $M$ is the number of codewords.

We compare the performance of a Probabilistic Latent Semantic Analysis (pLSA) with the standard BoW representation, SVM, linear SMM (LSMM), and nonlinear SMM (NLSMM). For SMM, we use the empirical embedding kernel with Gaussian RBF base kernel $k$: $K(\mathbf{h}_i,\mathbf{h}_j) = \sum_{r=1}^M\sum_{s=1}^M h_i(c_r)h_j(c_s)k(c_r,c_s)$ where $\mathbf{h}_i$ is the histogram of the $i$th image and $c_r$ is the $r$th SIFT vector. A Gaussian RBF kernel is also used as the level-2 kernel for nonlinear SMM. For the SVM, we adopt a Gaussian RBF kernel with $\chi^2$-distance between the histograms \cite{Vedaldi09:multiple}, i.e., $K(\mathbf{h}_i,\mathbf{h}_j)= \exp\left(-\gamma\chi^2(\mathbf{h}_i,\mathbf{h}_j)\right)$ where $\chi^2(\mathbf{h}_i,\mathbf{h}_j) = \sum_{r=1}^M\frac{(h_i(c_r)-h_j(c_r))^2}{h_i(c_r)+h_j(c_r)}$. The parameters of the algorithms are fixed by 10-CV over parameters $C\in\{2^{-3},2^{-2},\dotsc,2^7\}$ and $\gamma\in\{0.01,0.1,1\}$. For NLSMM, we use the best $\gamma$ of LSMM in the base kernel and perform 10-CV to choose $\gamma$ parameter only for the level-2 kernel. To deal with multiple categories, we adopt the pairwise approach and voting scheme to categorize test images. The results in Figure \ref{fig:naturalscene-acc} illustrate the benefit of the distribution-based framework. Understanding the context of a complex scene is challenging. Employing distribution-based methods provides an elegant way of utilizing higher-order statistics in natural images that could not be captured by traditional sample-based methods.
 
\section{Conclusions}
\label{sec:conclusions}
 
This paper proposes a method for kernel-based discriminative learning on probability distributions. The trick is to embed distributions into an RKHS, resulting in a simple and efficient learning algorithm on distributions. A family of linear and nonlinear kernels on distributions allows one to flexibly choose the kernel function that is suitable for the problems at hand. Our analyses provide insights into the relations between distribution-based methods and traditional sample-based methods, particularly the flexible SVM that allows the SVM to place different kernels on each training example. The experimental results illustrate the benefits of learning from a pool of distributions, compared to a pool of examples, both on synthetic and real-world data. 

\subsubsection*{Acknowledgments}   

KM would like to thank Zoubin Gharamani, Arthur Gretton, Christian Walder, and Philipp Hennig for a fruitful discussion. We also thank all three insightful reviewers for their invaluable comments. 
   
\small{
\bibliography{smm-nips2012}  
\bibliographystyle{unsrt}}
 
\newpage
\appendix
\appendixpage
\section{Proof of Theorem \ref{thm:representer}}

\setcounter{mydef}{0}
\begin{mythm}
\label{thm:representer1}
Given training examples $(\pp{P}_i,y_i)\in \pspace\times\mathbb{R},\,i=1,\dotsc,m$, a strictly monotonically increasing function $\Omega:[0,+\infty)\rightarrow\mathbb{R}$, and a loss function $\ell:(\pspace\times\mathbb{R}^2)^m \rightarrow\mathbb{R}\cup\{+\infty\}$, any $f\in\mathcal{H}$ minimizing the regularized risk functional
\begin{equation}
  \label{eq:regfunc} 
  \ell\left(\pp{P}_1,y_1,\mathbb{E}_{\pp{P}_1}[f],\dotsc,\pp{P}_m,y_m,\mathbb{E}_{\pp{P}_m}[f]\right) 
  + \Omega\left(\|f\|_{\mathcal{H}}\right)
\end{equation}
\noindent admits a representation of the form $ f = \sum_{i=1}^m\alpha_i \abbrvmm{\pp{P}_i}$ for some $\alpha_i\in\mathbb{R}, \, i=1,\dotsc,m$.
\end{mythm}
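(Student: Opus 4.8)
The plan is to adapt the classical orthogonality argument behind the representer theorem, exploiting the fact that the loss term depends on $f$ only through the $m$ scalars $\mathbb{E}_{\pp{P}_i}[f]$, which by the reproducing property equal the inner products $\langle \abbrvmm{\pp{P}_i}, f \rangle_{\hbspace}$. First I would note that the boundedness assumption on $k$ guarantees each mean embedding $\abbrvmm{\pp{P}_i}$ is a well-defined element of $\hbspace$, so that the finite-dimensional subspace $\mathcal{S} = \mathrm{span}\{\abbrvmm{\pp{P}_1}, \dotsc, \abbrvmm{\pp{P}_m}\} \subseteq \hbspace$ is legitimate and one may orthogonally project onto it.

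Next I would take an arbitrary $f \in \hbspace$ and write its orthogonal decomposition $f = f_{\parallel} + f_{\perp}$ with $f_{\parallel} \in \mathcal{S}$ and $f_{\perp} \in \mathcal{S}^{\perp}$. The central step is to observe that every argument fed to $\ell$ is invariant under this splitting: since $\abbrvmm{\pp{P}_i} \in \mathcal{S}$ and $f_{\perp} \perp \mathcal{S}$, the reproducing property gives $\mathbb{E}_{\pp{P}_i}[f] = \langle \abbrvmm{\pp{P}_i}, f_{\parallel} + f_{\perp} \rangle_{\hbspace} = \langle \abbrvmm{\pp{P}_i}, f_{\parallel} \rangle_{\hbspace} = \mathbb{E}_{\pp{P}_i}[f_{\parallel}]$ for every $i$. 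Hence the loss term takes the same value at $f$ and at $f_{\parallel}$, regardless of the (otherwise unconstrained) form of $\ell$.

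For the regularizer, Pythagoras yields $\|f\|_{\hbspace}^2 = \|f_{\parallel}\|_{\hbspace}^2 + \|f_{\perp}\|_{\hbspace}^2 \geq \|f_{\parallel}\|_{\hbspace}^2$, and since $\Omega$ is strictly monotonically increasing, $\Omega(\|f\|_{\hbspace}) \geq \Omega(\|f_{\parallel}\|_{\hbspace})$, with equality if and only if $f_{\perp} = 0$. Combining the two observations, the objective evaluated at $f_{\parallel}$ is no larger than at $f$, so any minimizer must satisfy $f_{\perp} = 0$, i.e.\ $f = f_{\parallel} \in \mathcal{S}$. This is precisely the claimed form $f = \sum_{i=1}^m \alpha_i \abbrvmm{\pp{P}_i}$ for suitable coefficients $\alpha_i \in \mathbb{R}$.

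The argument is essentially routine once the reduction via the reproducing property is in place; I expect no substantial obstacle. The only points requiring genuine care are confirming that each $\abbrvmm{\pp{P}_i}$ actually lies in $\hbspace$ (so that $\mathcal{S}$ and the projection are well-defined) and that the loss truly factors through the finitely many scalars $\mathbb{E}_{\pp{P}_i}[f]$ rather than depending on $f$ in some richer way — both of which follow directly from the stated boundedness of $k$ and the reproducing property $\mathbb{E}_{\pp{P}}[f] = \langle \abbrvmm{\pp{P}}, f \rangle_{\hbspace}$.
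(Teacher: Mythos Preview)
Your proposal is correct and follows essentially the same orthogonal-decomposition argument as the paper's proof: project onto $\mathrm{span}\{\abbrvmm{\pp{P}_1},\dotsc,\abbrvmm{\pp{P}_m}\}$, observe via the reproducing property that the loss depends only on the parallel component, and use Pythagoras together with the strict monotonicity of $\Omega$ to force $f_\perp=0$. The paper additionally cites a result guaranteeing boundedness of the functionals $\mathbb{E}_{\pp{P}_i}[\cdot]$ to justify $\abbrvmm{\pp{P}_i}\in\hbspace$, which is exactly the care point you flagged.
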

 
\begin{proof}
  By virtue of Proposition 2 in \cite{Sriperumbudur10:Metrics}, the linear functional $\mathbb{E}_{\pp{P}}[\cdot]$ are bounded for all $\pp{P}\in\pspace$. Then, given $\pp{P}_1,\pp{P}_2,...,\pp{P}_m$, any $f\in\mathcal{H}$ can be decomposed as 
  \begin{equation*}
    f = f_{\mu} + f^{\perp}
  \end{equation*}
\noindent where $f_{\mu}\in\mathcal{H}$ lives in the span of $\abbrvmm{\pp{P}_i}$, i.e., $f_{\mu}=\sum_{i=1}^m\alpha_i\abbrvmm{\pp{P}_i}$ and $f^{\perp}\in\mathcal{H}$ satisfying, for all $j$, $\langle f^{\perp},\abbrvmm{\pp{P}_j}\rangle = 0$. Hence, for all $j$, we have 
\begin{equation*}
\mathbb{E}_{\pp{P}_j}[f] = \mathbb{E}_{\pp{P}_j}[f_{\mu} + f^{\perp}] = \langle f_{\mu}+f^{\perp},\abbrvmm{\pp{P}_j}\rangle = \langle f_{\mu},\abbrvmm{\pp{P}_j}\rangle + \langle f^{\perp},\abbrvmm{\pp{P}_j}\rangle = \langle f_{\mu},\abbrvmm{\pp{P}_j}\rangle
\end{equation*}
\noindent which is independent of $f^{\perp}$. As a result, the loss functional $\ell$ in \eqref{eq:regfunc} does not depend on $f^{\perp}$. For the regularization functional $\Omega$, since $f^{\perp}$ is orthogonal to $\sum_{i=1}^m\alpha_i\abbrvmm{\pp{P}_i}$ and $\Omega$ is strictly monotonically increasing, we have 
\begin{equation*}
  \Omega(\|f\|) = \Omega(\|f_{\mu} + f^{\perp}\|)=\Omega(\sqrt{\|f_{\mu}\|^2 + \|f^{\perp}\|^2})\geq \Omega(\|f_{\mu}\|)
\end{equation*}
\noindent with equality if and only if $f^{\perp}=0$ and thus $f=f_{\mu}$. Consequently, any minimizer must take the form $f=\sum_{i=1}^m\alpha_i\abbrvmm{\pp{P}_i} = \sum_{i=1}^m\alpha_i\mathbb{E}_{\pp{P}_i}[k(x,\cdot)]$. 
\end{proof}

\section{Proof of Theorem \ref{thm:deviation}}

\setcounter{mydef}{2}
\begin{mythm}
  \label{thm:deviation2}
  Given an arbitrary probability distribution $\pp{P}$ with variance $\sigma^2$, a Lipschitz continuous function $f:\mathbb{R}\rightarrow\mathbb{R}$ with constant $C_f$, an arbitrary loss function $\ell : \mathbb{R}\times\mathbb{R}\rightarrow\mathbb{R}$ that is Lipschitz continuous in the second argument with constant $C_{\ell}$, it follows that 
  \begin{equation*}
    \abs{\mathbb{E}_{x\sim\pp{P}}[\ell(y,f(x))]  - \ell(y,\mathbb{E}_{x\sim\pp{P}}[f(x)])} \leq 2C_{\ell}C_f\sigma
  \end{equation*}
\noindent for any $y\in\mathbb{R}$.
\end{mythm}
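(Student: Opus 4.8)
The plan is to peel off the two Lipschitz constants one at a time, reducing the claim to a bound on the mean absolute deviation of the random variable $f(x)$, and then to control that deviation by the standard deviation $\sigma$ of $\pp{P}$. Throughout I would write $M := \mathbb{E}_{x\sim\pp{P}}[x]$ for the mean of $\pp{P}$ and observe that $\ell(y,\mathbb{E}_{x\sim\pp{P}}[f(x)])$ is constant with respect to the outer expectation, so the left-hand side equals $\lvert\,\mathbb{E}_{x\sim\pp{P}}[\ell(y,f(x)) - \ell(y,\mathbb{E}_{x\sim\pp{P}}[f(x)])]\,\rvert$.

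First I would apply the convexity inequality $\lvert\mathbb{E}[Z]\rvert \leq \mathbb{E}[\lvert Z\rvert]$ (Jensen for the absolute value) followed by the Lipschitz continuity of $\ell$ in its second argument. This yields the estimate $\lvert\,\mathbb{E}_{x\sim\pp{P}}[\ell(y,f(x))] - \ell(y,\mathbb{E}_{x\sim\pp{P}}[f(x)])\,\rvert \leq C_{\ell}\,\mathbb{E}_{x\sim\pp{P}}[\,\lvert f(x) - \mathbb{E}_{x\sim\pp{P}}[f(x)]\rvert\,]$, isolating the constant $C_{\ell}$ and leaving the mean absolute deviation of $f(x)$ to be bounded.

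Next I would estimate that deviation via the Lipschitz continuity of $f$. Because $f$ is nonlinear in general, $\mathbb{E}_{x\sim\pp{P}}[f(x)] \neq f(M)$, so I would route through $f(M)$ with the triangle inequality $\lvert f(x) - \mathbb{E}[f(x)]\rvert \leq \lvert f(x) - f(M)\rvert + \lvert f(M) - \mathbb{E}[f(x)]\rvert$. Taking expectations and applying the bound $\lvert f(a)-f(b)\rvert \leq C_f\lvert a-b\rvert$ to both terms reduces everything to the first absolute moment, giving $\mathbb{E}_{x\sim\pp{P}}[\lvert f(x)-\mathbb{E}[f(x)]\rvert] \leq 2C_f\,\mathbb{E}_{x\sim\pp{P}}[\lvert x-M\rvert]$. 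Finally, Jensen's inequality for the concave square root (equivalently Cauchy--Schwarz) supplies $\mathbb{E}_{x\sim\pp{P}}[\lvert x-M\rvert] \leq \sqrt{\mathbb{E}_{x\sim\pp{P}}[(x-M)^2]} = \sigma$. Chaining the three estimates produces the claimed bound $2C_{\ell}C_f\sigma$.

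The argument is mostly routine; the only steps needing care are the passage from the variance $\sigma^2$ to the first absolute moment $\mathbb{E}[\lvert x-M\rvert]$, and the decision to detour through $f(M)$ rather than $\mathbb{E}[f(x)]$ — it is precisely this detour that costs the factor of $2$. I would note in passing that the sharper constant $C_{\ell}C_f\sigma$ is actually attainable by bounding the mean absolute deviation directly through the variance, $\mathbb{E}[\lvert f(x)-\mathbb{E}[f(x)]\rvert] \leq \sqrt{\mathrm{Var}(f(x))} \leq C_f\sigma$, using that $\mathrm{Var}(f(x)) \leq \mathbb{E}[(f(x)-f(M))^2] \leq C_f^2\sigma^2$ since the variance is the minimal second moment about a constant; the looser factor-of-two route, however, is the one that matches the statement as written.
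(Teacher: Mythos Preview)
Your proposal is correct and follows essentially the same route as the paper's proof: pull the absolute value inside the expectation, strip off $C_{\ell}$ via Lipschitz continuity of $\ell$, split $\lvert f(x)-\mathbb{E}[f(x)]\rvert$ through $f(M)$, bound each piece by $C_f\,\mathbb{E}[\lvert x-M\rvert]$, and finish with $\mathbb{E}[\lvert x-M\rvert]\leq\sigma$. Your closing remark that the detour through $f(M)$ is what costs the factor of $2$, and that $\mathrm{Var}(f(x))\leq C_f^2\sigma^2$ would give the sharper constant $C_{\ell}C_f\sigma$, is a correct observation that goes beyond the paper.
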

  
\begin{proof}
Assume that $x$ is distributed according to $\pp{P}$. Let $m_X$ be the mean of $X$ in $\mathbb{R}^d$. Thus, we have
\begin{eqnarray*}
\abs{\mathbb{E}_{\pp{P}}[\ell(y,f(x))] - \ell(y,\mathbb{E}_{\pp{P}}[f(x)])} 
&\leq& \int \abs{\ell(y,f(\tilde{x})) - \ell(y,\mathbb{E}_{\pp{P}}[f(x)])} \dd\pp{P}(\tilde{x}) \\
&\leq& C_{\ell} \int \abs{f(\tilde{x}) - \mathbb{E}_{\pp{P}}[f(x)]} \dd\pp{P}(\tilde{x}) \\
&\leq& \underbrace{C_{\ell} \int \abs{f(\tilde{x}) - f(m_X)} \dd\pp{P}(\tilde{x})}_{A} + \underbrace{C_{\ell}\abs{f(m_X) - \mathbb{E}_{\pp{P}}[f(x)]}}_{B} \; .
\end{eqnarray*}
\begin{paragraph}{Control of ($A$)}
  The first term is upper bounded by 
  \begin{equation}
    C_{\ell}\int C_f\norm{\tilde{x} - m_X} \dd\pp{P}(\tilde{x}) \leq C_{\ell}C_f\sigma \enspace ,
  \end{equation}
  \label{eq:control-A}
  \noindent where the last inequality is given by $\mathbb{E}_{\pp{P}}[\norm{\tilde{x} - m_X}] \leq \sqrt{\mathbb{E}_{\pp{P}}[\norm{\tilde{x} - m_X}^2]} = \sigma$.
\end{paragraph}

\begin{paragraph}{Control of ($B$)}
  Similarly, the second term is upper bounded by
  \begin{equation}
    \label{eq:control-B}
    C_{\ell}\left|\int f(m_X) - f(\tilde{x})\right| \dd\pp{P}(\tilde{x}) \leq C_{\ell}\int C_f\norm{m_X - \tilde{x}} \dd\pp{P}(\tilde{x}) \leq C_{\ell}C_f\sigma \enspace .
  \end{equation}
\end{paragraph}
Combining \eqref{eq:control-A} and \eqref{eq:control-B} yields
\begin{equation*}
  \abs{\mathbb{E}_{\pp{P}}[\ell(y,f(x))] - \ell(y,\mathbb{E}_{\pp{P}}[f(x)])} \leq 2C_{\ell}C_f\sigma \enspace ,
\end{equation*}
thus completing the proof.
\end{proof}

\section{Proof of Lemma \ref{lem:smm-svm}}
 
\setcounter{mydef}{3}
\begin{mylem}
  \label{lem:smm-svm3} 
  Let $k(x,z)$ be a bounded p.d. kernel on a measure space such that $\iint k(x,z)^2\dd x\dd z < \infty$, and   $g(x,\tilde{x})$ be a square integrable function such that $\int g(x,\tilde{x})\dd\tilde{x} < \infty$ for all $x$. Given a sample $\{(\pp{P}_i,y_i)\}_{i=1}^m$ where each $\pp{P}_i$ is assumed to have a density given by $g(x_i,x)$, the linear SMM is equivalent to the SVM on the training sample $\{(x_i,y_i)\}_{i=1}^m$ with kernel $K_g(x,z)=\iint k(\tilde{x},\tilde{z})g(x,\tilde{x}) g(z,\tilde{z})\dd\tilde{x}\dd\tilde{z}$. 
\end{mylem}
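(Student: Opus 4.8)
The plan is to show that the two learning problems reduce to the \emph{same} quadratic program, by proving that the level-2 (expected) kernel matrix of the linear SMM on $\{(\pp{P}_i,y_i)\}$ coincides entrywise with the Gram matrix of $K_g$ on $\{x_i\}$. Recall that the linear SMM is nothing but an SVM in $\hbspace$ whose feature vectors are the mean embeddings $\abbrvmm{\pp{P}_i}$, so its optimization depends on the data only through the matrix $[\langle\abbrvmm{\pp{P}_i},\abbrvmm{\pp{P}_j}\rangle_{\hbspace}]_{i,j}$. Since a soft-margin SVM --- whether written in primal form over the feature space or in dual form over the kernel matrix --- is entirely determined by this Gram matrix together with the labels $y_i$, it suffices to exhibit a feature map $\Phi:\inspace\to\hbspace$ for $K_g$ such that $\Phi(x_i)=\abbrvmm{\pp{P}_i}$ for every $i$.

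First I would introduce the candidate feature map $\Phi(x)\triangleq\int_{\inspace} k(\tilde{x},\cdot)\,g(x,\tilde{x})\dd\tilde{x}\in\hbspace$. Because $\pp{P}_i$ has density $g(x_i,\cdot)$, the change of measure $\dd\pp{P}_i(\tilde{x})=g(x_i,\tilde{x})\dd\tilde{x}$ gives immediately $\Phi(x_i)=\int k(\tilde{x},\cdot)\dd\pp{P}_i(\tilde{x})=\abbrvmm{\pp{P}_i}$, i.e. the feature vector of $x_i$ under $K_g$ is exactly the mean embedding the SMM uses. Next, using linearity of the inner product and the reproducing property $\langle k(\tilde{x},\cdot),k(\tilde{z},\cdot)\rangle_{\hbspace}=k(\tilde{x},\tilde{z})$, I would pull the two integrals out of the inner product to obtain
\begin{equation*}
\langle\Phi(x),\Phi(z)\rangle_{\hbspace}=\iint k(\tilde{x},\tilde{z})\,g(x,\tilde{x})\,g(z,\tilde{z})\dd\tilde{x}\dd\tilde{z}=K_g(x,z),
\end{equation*}
which simultaneously shows that $K_g$ is a valid p.d. kernel (an inner product of Hilbert-space elements) and that $K_g(x_i,x_j)=\langle\abbrvmm{\pp{P}_i},\abbrvmm{\pp{P}_j}\rangle_{\hbspace}$. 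Evaluating at the sample points then yields the desired identity of Gram matrices, and the equivalence of the two SVMs follows.

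The main obstacle is the measure-theoretic bookkeeping rather than the algebra: I must justify that $\Phi(x)$ converges as an $\hbspace$-valued (Bochner) integral and that interchanging $\int\cdots\dd\tilde{x}$ with $\langle\cdot,\cdot\rangle_{\hbspace}$ --- equivalently, applying Fubini to the iterated integral defining $K_g$ --- is legitimate. This is exactly where the hypotheses enter. Boundedness of $k$ already guarantees that each feature vector $\abbrvmm{\pp{P}_i}=\Phi(x_i)$ lies in $\hbspace$, as used throughout the paper; square integrability, $\iint k(\tilde{x},\tilde{z})^2\dd\tilde{x}\dd\tilde{z}<\infty$, lets me view $k$ as a Hilbert--Schmidt integral operator, while $g(x,\cdot)\in L^2$ with $\int g(x,\tilde{x})\dd\tilde{x}<\infty$ controls the mass of each density; together a Cauchy--Schwarz bound shows $\norm{\Phi(x)}_{\hbspace}<\infty$ and that the double integral for $K_g$ is absolutely convergent, licensing Fubini and the exchange with the inner product. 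Once these finiteness and interchange steps are in place, the remainder is the essentially tautological observation that two SVMs with identical labelled Gram matrices are the same optimization problem, hence produce the same dual solution and the same decision function.
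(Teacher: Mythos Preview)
Your proposal is correct and follows essentially the same route as the paper: both arguments hinge on the identity $K_g(x_i,x_j)=\langle\abbrvmm{\pp{P}_i},\abbrvmm{\pp{P}_j}\rangle_{\hbspace}$, obtained from the feature map $\Phi(x)=\int k(\tilde{x},\cdot)\,g(x,\tilde{x})\dd\tilde{x}$, and then conclude that the two learning problems coincide. The only cosmetic difference is that the paper makes the reduction explicit by applying the two representer theorems (the classical one and Theorem~\ref{thm:representer}) to write both objectives in terms of the same coefficients $\alpha_j$ and the matrix $[K_g(x_i,x_j)]$, whereas you invoke the equivalent general principle that an SVM is determined by its labelled Gram matrix; your attention to the Bochner/Fubini bookkeeping is in fact more careful than the paper's own proof.
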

\begin{proof}
For a training sample $\{(x_i,y_i)\}_{i=1}^m$, the SVM with kernel $K_g$ minimizes 
\begin{equation*}
\ell(\{x_i, y_i,f(x_i) + b\}_{i=1}^m) + \lambda\|f\|^2_{\hbspace_{K_g}} \enspace .
\end{equation*}
By the representer theorem, $f(x) = \sum_{i=1}^m\alpha_iK_g(x,x_j)$ with some $\alpha_i\in\mathbb{R}$, hence this is equivalent to 
\begin{equation*}
\ell(\{x_i,y_i,\sum_{j=1}^m\alpha_jK_g(x_i,x_j)+b\}_{i=1}^m) + \lambda\sum_{i,j=1}^m\alpha_i\alpha_jK_g(x_i,x_j) \enspace . 
\end{equation*}
Next, consider the kernel mean of the probability measure $g(x_i,x)dx$ given by $\mu_i = \int k(\cdot,\tilde{x})g(x_i, \tilde{x})\dd\tilde{x}$ and note that $\langle\mu_i, f\rangle_{\hbspace_k}=\int f(\tilde{x})g(x_i, \tilde{x})\dd\tilde{x}$ for any $f\in\hbspace_k$. The linear SMM with loss $\ell$ and kernel $k$ minimizes 
\begin{equation*}
\ell(\{\pp{P}_i,y_i,\langle\mu_i,f \rangle_{\hbspace_k} + b\}_{i=1}^m) + \lambda\|f\|^2_{\hbspace_k} \enspace . 
\end{equation*}
By Theorem \ref{thm:representer}, each minimizer $f$ admits a representation of the form 
\begin{equation*}
f = \sum_{j=1}^m\alpha_j\mu_j=\sum_{j=1}^m\alpha_j\int k(\cdot,\tilde{x})g(x_j, \tilde{x})\dd\tilde{x} \enspace . 
\end{equation*}
Thus, for this $f$ we have 
\begin{equation*}
 \langle \mu_i,f \rangle_{\hbspace_k} = \sum_{j=1}^m\alpha_j\iint k(\tilde{z},\tilde{x})g(x_i,\tilde{x})g(x_j,\tilde{z}) \dd\tilde{x}\dd\tilde{z} = \sum_{j=1}^m\alpha_jK_g(x_i,x_j) 
\end{equation*}
\noindent and 
\begin{equation*}
\|f\|^2_{\hbspace_k} = \sum_{i,j=1}^m\alpha_i\alpha_j\langle\mu_i,\mu_j\rangle = \sum_{i,j=1}^m\alpha_i\alpha_jK_g(x_i,x_j)
\end{equation*}
\noindent , as above. This completes the proof.
\end{proof}

\end{document}